\def\eqref#1{equation~\ref{#1}}
\def\floor#1{\lfloor #1 \rfloor}
\def\1{\bm{1}}
\DeclareMathAlphabet{\mathsfit}{\encodingdefault}{\sfdefault}{m}{sl}
\SetMathAlphabet{\mathsfit}{bold}{\encodingdefault}{\sfdefault}{bx}{n}
\def\gA{{\mathcal{A}}}
\def\gH{{\mathcal{H}}}
\def\gS{{\mathcal{S}}}
\def\sR{{\mathbb{R}}}
\DeclareMathOperator*{\argmax}{arg\,max}
\newtheorem{theorem}{Theorem}
\newtheorem{lemma}{Lemma}
\title{On the Convergence of Monte Carlo UCB for Random Length Episodic MDPs}
\author{Zixuan Dong\textsuperscript{1,2}, Che Wang\textsuperscript{2,3,$\dagger$}, Keith Ross\textsuperscript{4}}
\keywords{Monte Carlo, UCB, MCES, tabular RL, almost sure convergence} 
\begin{document}

\maketitle  

\begin{abstract}
In reinforcement learning, Monte Carlo algorithms update the Q function by averaging the episodic returns. In the Monte Carlo UCB (MC-UCB) algorithm, the action taken in each state is the action that maximizes the Q function plus an Upper Confidence Bounds (UCB) exploration term, which biases the choice of actions to those that have been chosen less frequently. Although there has been significant work on establishing regret bounds for MC-UCB, most of that work has been focused on finite-horizon versions of the problem, for which each episode terminates after a constant number of steps. For such finite-horizon problems, the optimal policy depends both on the current state and the time within the episode. However, for many natural episodic problems, such as games like Go and Chess and robotic tasks, the episode is of random length and the optimal policy is stationary. For such environments, it is an open question whether the Q-function in MC-UCB will converge to the optimal Q function; we conjecture that, unlike Q-learning, it does not converge for all MDPs. We nevertheless show that for a large class of MDPs, which includes stochastic MDPs such as blackjack and deterministic MDPs such as Go, the Q function in MC-UCB converges almost surely to the optimal Q function. An immediate corollary of this result is that it also converges almost surely for all finite-horizon MDPs. We also provide numerical experiments, providing further insights into MC-UCB. 
\end{abstract}

\section{Introduction}
One of the most fundamental results in tabular reinforcement learning (RL) is that the Q-function estimator for Q-learning converges almost surely to the optimal Q-function, without any restriction on the environment. This classic result was originally proved for infinite-horizon discounted MDPs, for which the the optimal policy is both stationary and deterministic \citep{tsitsiklis1994asynchronous, jaakkola1994convergence}. Q-learning is based on dynamic programming and uses back-ups to update the estimates of the optimal action-value function.

Besides Q-learning, another major class of RL algorithms is Monte Carlo (MC) algorithms \citep{Sutton1998}, which are often employed for episodic MDPs. With MC algorithms, an optimal policy is sought by repeatedly running episodes and recording the episodic returns. After each episode, the estimate for the optimal Q function is updated using the episodic return. Then a new episode is generated using the most recent estimate of the Q-function. In contrast with Q-learning, pure MC algorithms do not employ backups. 

For many natural episodic problems, such as games like Go and Chess and robotic tasks, the episode ends when a terminal state is reached. The episode therefore is of random length. For episodic environments with random length episodes, the natural optimization criterion is to maximize the (expected) total return until the episode ends. For this class of problems, it is well known in the MDP literature that there is an optimal policy that is both stationary (does not depend on the time index) and deterministic \citep{bertsekas1991analysis}. 

As with Q-learning, it is important to understand the convergence properties of Monte Carlo algorithms with random-length episodes. We note that MC algorithms are often used in practice. For example, training AlphaZero \citep{silver2018general} employs Monte Carlo in both the inner loop (where Monte Carlo Tree Search (MCTS) is used) and in the outer loop, where a two-player MC episode is run until game termination, at which time estimates for the Q functions are updated. As it turns out, unlike Q-learning, MC algorithms are not guaranteed to converge. Indeed, counterexamples exist for non-convergence for an important class of MC algorithms \citep{bertsekas1996neuro, wang2022on}. 

In this paper, we consider an important class of MC algorithms, namely, MC algorithms that use Upper-Confidence Bounds (UCB) exploration at each state. In practice, count-based UCB exploration has been used for many important tasks, including tree search and online RL with human feedback \citep{silver2018general, pmlr-v125-jin20a, bellemare2016unifying, copo}. As discussed in the related work, such algorithms have been studied extensively for the fixed horizon optimization criterion, for which the optimal policy is non-stationary. Here we explore MC-UCB algorithms for random-length episodes, which is often the more natural setup in practice. For such environments, although the optimal policy is stationary, we conjecture that MC-UCB does not always converge. 

The main contribution of this paper is to establish almost sure convergence for MC-UCB for a very large and important class of environments, specifically, for environments whose states are never revisited when employing an optimal policy. This class of MDPs includes stochastic MDPs such as blackjack and many gridworld environments with randomness. It also includes all deterministic environments and all episodic environments with a monotonically changing value in the state. The proof is based on induction and a probabilistic argument invoking the strong law of large numbers. 

A second contribution of this paper is the numerical study of MC-UCB. These results show that in two classic episodic environments, MC-UCB can achieve competitive performance and converge in both the policy and the value function, when compared to an ideal but less practical exploration scheme where each episode must be initialized with a random state-action pair. 

\section{Preliminaries}
RL typically aims to find the optimal policy for a  Markov Decision Process (MDP). A finite MDP can be represented by a tuple $\{\gS, \gA, r, P\}$, where $\gS$ is the finite state space; $\gA$ is the finite action space; $r$ is the reward function from $\gS\times\gA$ to the reals; and $P$ is the dynamics function where for any $s\in\gS$ and $a\in\gA$, $P(\cdot|s, a)$ gives a probability distribution over the state space $\gS$. We denote $\pi$ for the policy. In general a policy may depend on the current state, the current time and the entire past history. Denote $s_t^{\pi}$ and $a_t^{\pi}$ for the state and action at time $t$ under policy $\pi$, respectively. 

We assume that the state space has a {\em terminal state} denoted by $\tilde{s}$. When the MDP enters state $\tilde{s}$, the MDP terminates. Let $T^{\pi}$ denote the time when the MDP enters $\tilde{s}$. $T^{\pi}$ is a random variable, and can be infinite if the MDP doesn't reach the terminal state. The expected total reward when starting in state $s$ is
\begin{equation}
    V_{\pi}(s) = E[\sum_{t=0}^{T^{\pi}} r(s_t^{\pi}, a_t^{\pi}) | s_0 =s  ]
    \label{criterion}
\end{equation}
Similarly we define the action-value function as
\begin{equation}
    Q_{\pi}(s,a) = E[\sum_{t=0}^{T^{\pi}} r(s_t^{\pi}, a_t^{\pi}) | s_0 =s, a_0 = a  ]
    \label{Qcriterion}
\end{equation}
Further denote
\[
Q^*(s,a) = \max_{\pi}Q^{\pi}(s,a)
\]
\[
V^*(s) = \max_{\pi}V^{\pi}(s) = \max_a Q^*(s,a)
\]

Let $\pi^*$ denote a policy that maximizes (\ref{criterion}). We say that the MDP is {\em episodic} if under any optimal policy $\pi^*$, $P(T^{\pi^*} < \infty| s_0 =s) =1$ for all $s$.
Note that in this definition of an episodic MDP, episodes are of random-length.
Maximizing (\ref{criterion}) for an episodic MDP is sometimes referred to as the stochastic shortest path problem \citep{bertsekas1991analysis}. For the stochastic shortest path problem, there exists an optimal policy that is both stationary
and deterministic, that is, a policy of the form $\pi^*: \gS \rightarrow \gA $.

Another popular optimization criterion is the finite-horizon criterion, where $T^{\pi}$ in (\ref{criterion}) is replaced with a fixed deterministic value $h$. In this case rewards accumulate to time $h$ whether the MDP terminates or not. For finite horizon problems, optimal policies are typically not stationary; however, it can be shown that there exists an optimal policy that depends on the current state $s$ and the current time $t$ \citep{bertsekas1991analysis}.  

An MDP is said to be {\bf Optimal Policy Feed-Forward (OPFF)} if under any optimal policy a state is never revisited \citep{wang2022on}. Note that this definition allows for MDPs for which states are revisited under non-optimal policies. Many important MDPs are OPFF, including the stochastic environment Blackjack, the environments examined at the end of this paper, and all deterministic environments.  

\subsection{Monte Carlo with Upper Confidence Bound Framework}
\label{sec:mcucb}
In Monte Carlo RL, an optimal policy is sought by repeatedly running episodes and recording the episodic returns. After each episode, the estimate for the optimal Q function, denoted here as $Q(s,a)$, is updated using the episodic return. Then a new episode is generated using the most recent estimate of the Q-function $Q(s,a)$.

There are many possible variations of Monte Carlo RL algorithms. Two broad classes are MC with Exploring Starts (MCES) and MC using Upper Confidence Bounds (MC-UCB). In MCES, the policy used to select actions when in state $s$ is simply $a = \argmax Q(s,a)$, that is, the action that maximizes the current estimates of the Q function. In MCES, exploration is performed by starting the episodes in randomly chosen $(s,a)$ pairs. One of the major drawbacks of MCES is that it assumes that during training we have the ability to choose the starting state and action for an episode. This is not a feasible option for many real-world environments. MC-UCB does not have this drawback; for example, during training, each episode may start in a given fixed state. In MC-UCB, exploration is performed by adding an exploration term to $Q(s,a)$, then taking the $\argmax$ of the sum to select an action. It is very similar to how UCB is done with multi-armed bandits, but in the MDP case, the Q estimate and the exploration term depend on the state. 

\begin{algorithm}[h]
\caption{MCUCB}
	\label{alg:mc-ucb}
	\begin{algorithmic}[1]
	\STATE Initialize: $\pi(s) \in \gA$ arbitrarily, $Q(s,a) = 0 \in \sR$, for all $s\in \gS, a \in \gA$; \\  $Returns(s,a) \leftarrow$ empty list, for all $s\in \gS, a \in \gA$.
	\WHILE{True} \label{line:while-loop-forever}
	
	\STATE Starting from the given initial state $s_0$, generate an episode following $\pi$: $s_0, a_0, s_1,a_1,\ldots,s_{T-1}, a_{T-1},s_{T}$.
	\STATE $G \leftarrow 0$
	\FOR{$t = T-1, T-2,\dots, 0$}
    \STATE $G \leftarrow G + r(s_t, a_t)$
    \STATE $N(s_t)\leftarrow N(s_t)+1$
    \STATE $N(s_t,a_t)\leftarrow N(s_t,a_t)+1$
    \STATE Append $G$ to $Returns(s_t, a_t)$
    \STATE $Q(s_t, a_t) \leftarrow average(Returns(s_t, a_t))$ \label{line:q-update}
    \STATE $\pi(s_t) \leftarrow argmax_a Q(s_t, a) + C\sqrt{\frac{\log N(s_t)}{N(s_t, a)}}$.
    \label{line:policy-update}
	\ENDFOR
	\ENDWHILE
	\end{algorithmic}
\end{algorithm}

In this paper we focus on the convergence properties of MC-UCB for MDPs with random-length episodes where rewards accrue until the episode enters the terminal state, as is typically the case in real applications. Algorithm \ref{alg:mc-ucb} provides a basic form of the MC-UCB algorithm.

In the algorithm, after every episode, the optimal action-value function $Q(s,a)$ for a particular state-action pair $(s,a)$ is  estimated by averaging all the sampled returns seen for that pair. When running an episode, when in state $s$, we select the action that maximizes the sum of $Q(s,a)$ and the Upper Confidence Bound exploration term $\sqrt{\frac{C\log N(s)}{N(s, a)}}$, where $C$ is a constant controlling the exploration rate, $N(s)$ is the total number of visits for a state $s$, and $N(s, a)$ is the total number of visits for a state-action pair $(s, a)$. We emphasize that the above framework is for environments where states are never re-visited. We will later modify the algorithm for OPFF environments where cycles are possible. 

Because the policy for generating episodes changes after each episode, we cannot directly apply the Strong Law of Large Numbers (SLLN) to prove almost sure convergence; indeed the returns collected for a given state-action pair are neither independently distributed nor identically distributed. 


\section{Related Work}
In contrast with Monte Carlo methods, Q-learning is based on dynamic programming and uses back-ups to update the estimates of the optimal action-value function. For infinite-horizon discounted MDPs, Q-learning can be shown to converge with probability one to the optimal action-value function under general conditions, by putting Q-learning into the stochastic approximations framework \citep{tsitsiklis1994asynchronous, jaakkola1994convergence}. Moreover, for the stochastic shortest path problem (random-length episodes), it has been shown that the sequence of Q-learning iterates is bounded, implying the convergence of the action-value function \citep{yu2013boundedness}. While the MC-UCB algorithm also aims to solve the stochastic shortest path problem, the proofs of convergence for Q-learning cannot be easily adapted for Monte Carlo algorithms. 
The MCES algorithm is related to the MC-UCB algorithm since both MCES and MC-UCB average Monte Carlo episodic returns to estimate the action-value function. However, the two algorithms employ very different exploration strategies. While MCES randomly chooses state-action pairs at the beginning of each episode, MC-UCB explores by adding UCB exploration to the current action-value estimate. In the general RL setting, the convergence of MCES is not guaranteed, since there exists a counterexample where MCES indeed does not converge to the optimal action-value function \citep{bertsekas1996neuro, wang2022on}. We conjecture that the MC-UCB algorithm also does not converge in general. 

However, if the MCES algorithm is modified so that the Q-value estimates are updated at the same rate for all state-action pairs, and the discount factor is strictly less than one, then the convergence will be guaranteed \citep{tsitsiklis2002convergence}. Furthermore, by assuming that all policies are proper, it is shown via the stochastic approximations methodology that MCES also converges for undiscounted cases \citep{chen2018convergence, liu2020convergence}. More recently, a new but simple approach \citep{wang2022on} to this problem relaxes the assumption of a uniform update rate for all state-action pairs, providing almost sure convergence under the OPFF assumption, that is, a state is never re-visited within an episode under an optimal policy. 
In this paper we pursue this same strategy for the MC-UCB algorithm. 
However, we emphasize here that our proof is significantly more challenging than the proof in \citet{wang2022on} due to the nature MC-UCB algorithm.

There has been significant work on regret bounds for the finite-horizon MDP problems \citep{Chang05anadaptive, pmlr-v70-azar17a, agarwal2019reinforcement}. When the horizon is finite, then the optimal policy is non-stationary. Furthermore, many problems of practical interest do not have a fixed-length horizon. We also note regret bounds typically only imply convergence in expectation, not almost sure convergence. 
Although regret bounds are important, we argue that it is first important to understand the convergence properties of the MC-UCB algorithm since convergence is not necessarily guaranteed as it is in finite-horizon problems. 


We also note that the MC-UCB framework shares some similarities with the Monte Carlo Tree Search (MCTS) algorithms. According to \citet{10.5555/3207692.3207712}, the UCT algorithm \citep{10.1007/11871842_29}, one of the fundamental MCTS algorithms, can be viewed as an offline on-policy every-visit MC control algorithm that uses the UCB1 (a variant of UCB, discussed in appendix \ref{appendix:mab}) as the policy.  The MCTS algorithms can also be considered as finite-horizon since they truncate the complete game trajectories at a fixed length.

\section{Almost Sure Convergence for OPFF MDPs}
In this section, we return to the MC-UCB framework described in Section \ref{sec:mcucb} and derive the almost sure convergence of both the action-value function and state-value function under the OPFF assumption. 

Before we present the almost sure convergence proof for the OPFF setting, we note that OPFF environments allow the learning agent to revisit previous states for sub-optimal policies; therefore, the agent may get into an infinite loop where the terminal state is never reached. To address this issue, we set a time limit $M$ depending on the number of states in the MDP, as shown in Algorithm \ref{alg:mcucb-opff}. If the length of an episode exceeds this limit, we terminate the algorithm (line \ref{line:termination}). Additionally, there are some special corner cases where taking sub-optimal actions never yields a complete episode, such as actions leading to a self-loop. Therefore, we add lines \ref{line:detect-loop} and \ref{line:detect-loop-next} to the algorithm. Since the action-value function for optimal actions will be almost surely greater than that for sub-optimal actions, as we will show in the proof, we can generate a complete episode in those corner cases. Finally, we use the first-visit mechanism (line \ref{line:first-visit}) to calculate the average returns, since the every-visit mechanism may cause a slower convergence for tasks with cycles \citep{10.1007/BF00114726, Sutton1998}.


\begin{algorithm}[h]
\caption{First-visit MC-UCB for OPFF MDPs}
	\label{alg:mcucb-opff}
	\begin{algorithmic}[1]
	\STATE Initialize: $\pi(s) \in \gA$ arbitrarily, $Q(s,a) = 0 \in \sR$, for all $s\in \gS, a \in \gA$; \\  $Returns(s,a) \leftarrow$ empty list, for all $s\in \gS, a \in \gA$. 
	\WHILE{True} \label{line:opff-while-loop-forever}
    \WHILE{$t<M$ and the episode has not ended}\label{line:termination}
	\STATE Starting from the given initial state $s_0$, generate an episode following $\pi$: $s_0, a_0, s_1,a_1,\ldots,s_t, a_t$.
	\IF{loop detected for $(s_t,a_t)$} \label{line:detect-loop}
	\STATE $a_t\leftarrow \argmax_a Q(s_t,a)$ \label{line:detect-loop-next}
	\ENDIF
	\ENDWHILE
	
	\STATE $G \leftarrow 0$
	\FOR{$t = T-1, T-2,\dots, 0$}
    \STATE $G \leftarrow G + r(s_t, a_t)$
    \IF{$(s_t,a_t)$ does not appear in $s_0, a_0, s_1,a_1,\ldots,s_{t-1}, a_{t-1}$} \label{line:first-visit}
    \STATE $N(s_t)\leftarrow N(s_t)+1$
    \STATE $N(s_t,a_t)\leftarrow N(s_t,a_t)+1$
    \STATE Append $G$ to $Returns(s_t, a_t)$
    \STATE $Q(s_t, a_t) \leftarrow average(Returns(s_t, a_t))$ \label{line:opff-q-update}
    \STATE $\pi(s_t) \leftarrow \argmax_a Q(s_t, a) + \sqrt{C\frac{\log N(s_t)}{N(s_t, a)}}$.
    \ENDIF
    \label{line:opff-policy-update}
	\ENDFOR
	\ENDWHILE
	\end{algorithmic}
\end{algorithm}

Let $V_n(s)=\frac{1}{n}\sum_{t=1}^n G_t^s$ be the estimated state-value function of state $s$ by averaging total $n$ returns $\{G^s_t\}_{t=1}^n$, where $G_t^s$ is the $t$-th return collected in state $s$; $Q_n(s,a)=\frac{1}{T^s_a(n)}\sum_{u=1}^{T^s_a(n)} G_{a,u}^s$ be the estimated action-value function, where $G_u^{s,a}$ is the $u$-th return collected in state $s$ by choosing action $a$; $T_a^s(n)$ be the number of selections of action $a$ in the state $s$ out of total $n$ visits of that state. Let $\{r_u(s,a)\}_{u=1}^{T^k_{a^*}(n)}$ be finite $i.i.d$ reward random variables following an unknown law 
with an unknown but finite mean $\bar{r}(s,a)$. We now present the main result:

\begin{theorem}[\textbf{Almost sure convergence of MC-UCB for OPFF MDPs}]
\label{theorem:mcucb-opff-converge}
Suppose the MDP is OPFF. Then $V_n(s)$ converges to $V^*(s)$ $w.p.1$, for all $s \in \gS$; $Q_n(s,a)$ converges to $Q^*(s,a)$ $w.p.1$, for all $s \in \gS$ and $a\in\gA$.
\end{theorem}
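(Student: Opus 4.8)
The plan is to marry the classical UCB ``logarithmic exploration'' phenomenon with an induction over the feed-forward structure that the OPFF assumption imposes on \emph{optimal} trajectories. As a preliminary step I would establish infinite exploration. Since every episode starts from the fixed state $s_0$, that state is visited in every episode, and the exploration bonus $\sqrt{C\log N(s)/N(s,a)}$ forces each action to be selected infinitely often: if some $N(s,a)$ stayed bounded while $N(s)\to\infty$, its bonus would diverge and the action would eventually be reselected. Propagating this forward through the reachable transitions shows that every state reachable under some policy, and every action available there, is visited infinitely often w.p.1, so all counts satisfy $N(s,a)\to\infty$ and the averages defining $Q_n$ and $V_n$ are taken over infinitely many samples.

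Next I would set up the induction. Because the MDP is OPFF, the states visited under an optimal policy form a directed acyclic graph, so I can assign each state a rank equal to the length of the longest optimal path from it to $\tilde{s}$, with $\tilde{s}$ having rank $0$. A short argument shows that any successor of an optimal action at $s$ has rank strictly smaller than that of $s$. The induction runs over this rank. In the base case (states whose optimal action leads directly to $\tilde{s}$) the return recorded for the optimal action is essentially the i.i.d.\ immediate reward, so the ordinary SLLN gives $Q_n(s,a^*)\to Q^*(s,a^*)=V^*(s)$. The inductive hypothesis is that for every state $s'$ of strictly smaller rank, $V_n(s')\to V^*(s')$ and $Q_n(s',a)\to Q^*(s',a)$ w.p.1.

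The inductive step at a state $s$ has two coupled pieces. For the \emph{optimal} action $a^*$, the return for $(s,a^*)$ equals $r(s,a^*)$ plus the return obtained from a successor $s'$ (of strictly smaller rank) while following the \emph{current}, still-changing policy; these returns are neither independent nor identically distributed, so the SLLN does not apply directly. I would overcome this by using the inductive hypothesis that the realized value from each such $s'$ converges to $V^*(s')$, coupling the actual returns with i.i.d.\ returns drawn from the optimal downstream law, and invoking a Cesàro / bounded-increment martingale argument for sequences whose conditional means converge, to conclude $Q_n(s,a^*)\to V^*(s)$. For a strictly \emph{sub-optimal} action $a$ I do not need its successors to have smaller rank: since every conditional mean of its recorded return equals $V^{\pi}(s'')\le V^*(s'')$ for the current policy $\pi$, the universal bound $V^{\pi}\le V^*$ gives $\limsup_n Q_n(s,a)\le Q^*(s,a)<V^*(s)$ for free. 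With $Q_n(s,a^*)\to V^*(s)$ on one side and this strict gap $\Delta>0$ on the other, the standard UCB counting argument (now driven by the convergence just established rather than by i.i.d.\ concentration) shows that each sub-optimal action at $s$ is selected only $O(\log N(s))=o(n)$ times, so the all-action average $V_n(s)$ is dominated by optimal-action returns and converges to $V^*(s)$.

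Finally I would dispose of the corner cases that the loop-detection lines of Algorithm~2 are designed for: sub-optimal actions whose successors are not of smaller rank, or that induce cycles so that no complete episode is generated and the limit $M$ is reached. Here the gap established above guarantees that, after finitely many episodes, the greedy fallback $\argmax_a Q(s_t,a)$ invoked on loop detection selects an optimal action, so every episode terminates within the limit and the remaining $Q$ estimates converge. I expect the main obstacle to be exactly the non-stationarity of the returns combined with the circular dependence between the two pieces of the inductive step --- convergence of $Q_n(s,a^*)$ is what supplies the UCB gap, yet the UCB gap is what is needed to force $V_n(s)\to V^*(s)$. The induction on rank is precisely what breaks this circularity, because at each rank the downstream behaviour has already been shown to be near-optimal, and the free upper bound $V^{\pi}\le V^*$ is what lets the sub-optimal branch be controlled without extending the induction to its successors.
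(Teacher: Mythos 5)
Your proposal follows the paper's proof essentially step for step: the same induction over the DAG that the OPFF assumption induces on optimal trajectories (your longest-path rank is just a particular topological order, matching the paper's reordering $s_1,\dots,s_K$), the same three-part inductive step --- (i) $Q_n(s,a^*)\to Q^*(s,a^*)$ via the inductive hypothesis on successors, (ii) $\limsup_n Q_n(s,a_i)\le Q^*(s,a_i)$ for suboptimal $a_i$ \emph{without} needing the induction to apply to their successors, (iii) the bandit-style counting bound showing $T^s_{a_i}(n)/n\to 0$, hence $V_n(s)\to V^*(s)$ --- and the same final treatment of cycle-inducing suboptimal actions via the greedy fallback of Algorithm~2. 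You also correctly identify the circularity (policy convergence needs the gap, the gap needs value convergence) and that the rank induction is what breaks it, which is exactly the paper's mechanism. The one place you genuinely diverge is in the probabilistic tool used inside steps (i) and (ii): the paper decomposes the optimal-action returns by successor state and applies the SLLN to the i.i.d.\ rewards and empirical transition frequencies, and for suboptimal actions partitions the recorded returns according to which of the finitely many deterministic policies generated them, applying the SLLN within each class to get $Q^{\pi}(s,a_i)\le Q^*(s,a_i)$; you instead propose a conditional-mean/martingale-LLN (Cesàro) argument with a coupling to i.i.d.\ optimal-law returns. Your route is a legitimate substitute and, if carried out carefully, addresses head-on the fact that the downstream returns are neither independent nor identically distributed (a point the paper's subsample-averaging step treats rather lightly), but as stated it is the sketchiest part of your proposal --- the conditional mean of a return recorded at a successor $s'$ is $V^{\pi_t}(s')$ for the current UCB-perturbed policy, which does \emph{not} converge pointwise to $V^*(s')$ (suboptimal actions are taken infinitely often), so you would need to make the Cesàro step precise by showing the suboptimal selections at $s'$ have vanishing frequency along the relevant subsample. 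The paper's finite-partition-plus-SLLN device is the more elementary way to discharge exactly this difficulty.
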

\begin{proof}
Since the MDP is OPFF, its graph induced by an optimal policy is a directed acyclic graph (DAG).
So we can reorder the $K$ states so that by taking the optimal action from state $s_k$, we only transition to a state in the set $\{s_{k+1}, \dots, s_K\}$, for any $k\in\{1,\dots,K\}$. Here, $s_K$ is a terminal state.

The proof is then done by backward induction. For $s=s_K$, the result is trivially true. Suppose the statement is true for all states $s\in\{s_{k+1}, \dots, s_K\}$. We now show it is true for the state $s = s_k$ as well. For simplicity, we use $k$ to denote the state $s_k$ for any $k\in\{1,\dots,K\}$.

Note that we have the following decomposition:
\begin{align*}
    V_n(k) 
    &= \frac{1}{n}\sum_{t=1}^n G_t^k = \sum_a \frac{T_a^k(n)}{n}\cdot \frac{1}{T_a^k(n)}\sum_{u=1}^{T_a^k(n)} G_u^{k,a} = \sum_a \frac{T_a^k(n)}{n}Q_n(k,a)
\end{align*}

Let $a^*$ be an optimal action for state $k$.
We first show that the estimate $Q_n(k, a^*)$ converges to the optimal Q value $Q^*(k, a^*)$ almost surely for the optimal action $a^*$. We have:
\begin{align*}
    Q_n(k,a^*) &= \frac{1}{T^k_{a^*}(n)}\sum_{u=1}^{T^k_{a^*}(n)} (r_u(k,a^*) + \sum_{k'>k}\mathbb{I}\{S_u = k'\}G_u^{k'})\\
    &= \frac{1}{T^k_{a^*}(n)}\sum_{u=1}^{T^k_{a^*}(n)} r_u(k,a^*) +
      \sum_{k'>k}\frac{1}{T^k_{a^*}(n)}\sum_{u=1}^{T^k_{a^*}(n)}\mathbb{I}\{S_u = k'\}G_u^{k'}\\
    &= \frac{1}{T^k_{a^*}(n)}\sum_{u=1}^{T^k_{a^*}(n)} r_u(k,a^*) +  
      \sum_{k'>k}\frac{T_{a^*}^k(n, k')}{T^k_{a^*}(n)}\cdot \frac{1}{T_{a^*}^k(n, k')}\sum_{u=1}^{T^k_{a^*}(n)}\mathbb{I}\{S_u=k'\}G_u^{k'}
\end{align*}
where $S_u$ is the next state from state $k$ in the $u$-th iteration that contains the $(k,a^*)$ pair; and $T_{a^*}^k(n, k') = \sum_{u=1}^{T^k_{a^*}(n)}\mathbb{I}\{S_u=k'\}$.

Due to the UCB exploration, $(k, a^*)$ will be visited infinitely often, which implies $T^k_{a^*}(n)\overset{n\rightarrow\infty}{\rightarrow}\infty$. Hence, by SLLN, we have 

$$\frac{1}{T^k_{a^*}(n)}\sum_{u=1}^{T^k_{a^*}(n)} r_u(k,a^*)\overset{w.p.1}{\longrightarrow} \bar{r}(k,a^*) \quad \text{and} \quad \frac{\sum_u\mathbb{I}\{S_u=k'\}}{T^k_{a^*}(n)} \overset{w.p.1}{\longrightarrow} P(k'|k,a^*)$$

Moreover, by the inductive hypothesis, the V values for any states in the set $\{k+1,\dots,K\}$ will converge almost surely.
Therefore, given that $\frac{1}{T_{a^*}^k(n, k')}\sum_{u=1}^{T^k_{a^*}(n)}\mathbb{I}\{S_u=k'\}G_u^{k'}$ is the average return starting in the state $k'>k$, it eventually converges to $V^*(k')$ almost surely, since the UCB exploration will visit the transition $(k, a^*, k')$ infinitely often.

In this way, we have
\begin{equation}
\begin{split}
\label{optimal action convergence}
    Q_n(k,a^*) \overset{w.p.1}{\longrightarrow} \bar{r}(k,a^*) + \sum_{k'>k}P(k'|k,a^*)\cdot V^*(k') = Q^*(k,a^*)
\end{split} 
\end{equation}

Since taking non-optimal action may lead to states visited before, where the inductive hypothesis does not apply, we cannot follow the same procedure to prove the almost sure convergence of $Q_n(k,a_i)$ for any $a_i\ne a^*$. Instead, we can first prove $V_n(k)$ converges almost surely by showing $\frac{T_{a_i}^k(n)}{n} \rightarrow 0$ for any $a_i\ne a^*$.

Let $\Pi$ denote the set of all deterministic policies. For any $a_i\ne a^*$, we can rewrite:
\begin{align*}
    Q_n(k,a_i) = \sum_{\pi\in\Pi}\frac{T^k_{a_i}(n;\pi)}{T^k_{a_i}(n)}\cdot\frac{1}{T^k_{a_i}(n;\pi)}\sum_{u=1}^{T^k_{a_i}(n;\pi)} G_u^{k,a_i}(\pi)
\end{align*}
where, when following a policy $\pi$, $T_a^k(n;\pi)$ denotes the number of visits of the pair $(k,a)$, out of total $n$ visits of the state $k$; $G_u^{k,a_i}(\pi)$ denotes the $u$-th return collected in state $k$ by choosing action $a_i$.
By SLLN , for any $\pi\in\Pi$ such that $T^k_{a_i}(n;\pi)\rightarrow \infty$, we have:
\begin{align*}
    \frac{1}{T^k_{a_i}(n;\pi)}\sum_{u=1}^{T^k_{a_i}(n;\pi)} G_{a_i,u}^k(\pi) \overset{w.p.1}{\longrightarrow} Q^{\pi}(k,a_i)\leq Q^*(k,a_i)
\end{align*}
where $Q^{\pi}(*,*)$ is the action-value function for policy $\pi$. The above result further implies that $w.p.1$:
\begin{align}
\label{optimal policy}
    \limsup_{n\rightarrow \infty}Q_n(k,a_i) \leq Q^*(k,a_i)
\end{align}
Moreover, we observe that for any $a_i\neq a^*$, there exists some $\epsilon_1 > 0$ such that:
\begin{align*}
   Q^*(k,a_i) + \epsilon_1 \leq Q^*(k,a^*)
\end{align*}

Let $\Omega$ be the underlying sample space. Then, the set 
$$\Lambda_{a_i} = \{\nu\in\Omega: \limsup_{n\rightarrow \infty}Q_n(k,a_i)(\nu) \leq Q^*(k,a_i)(\nu)\}$$ 

has probability measure 1 for any suboptimal actions $a_i\ne a^*$ by the inequality (\ref{optimal policy}); and the set 

$$\Lambda_{a^*} = \{\nu\in\Omega: Q_n(k,a^*)(\nu) \overset{n\rightarrow\infty}{\longrightarrow} Q^*(k,a^*)(\nu)\}$$ 

has probability measure 1 for the optimal action $a^*$ by the convergence result (\ref{optimal action convergence}). Letting $\Lambda = \cap_{a\in\gA}\Lambda_a$,  we have $P(\Lambda)=1$.

Now, we fix $\omega\in\Lambda$. By the definition of limit superior, for any $0<\epsilon_2 <\epsilon_1$, there exists $N_1$ such that for $n\geq N_1$, we have:
\begin{align*}
   Q_n(k,a_i)(\omega) \leq Q^*(k,a_i)(\omega) + \epsilon_2
\end{align*}
Hence, for any $n\geq N_1$, we have:
\begin{align*}
   Q^*(k,a^*)(\omega) &\geq Q^*(k,a_i)(\omega) + \epsilon_1 \\
   &\geq Q_n(k,a_i)(\omega) + \epsilon_1 - \epsilon_2
\end{align*}

Now, we are ready to show $\frac{T_{a_i}^k(n)}{n} \rightarrow 0$ $w.p.1$ for any $a_i\ne a^*$.
Following a standard upper bound in Multi-armed Bandit literature (for example, see inequality 8.4 of section 8.1 of \citet{lattimore_szepesvari_2020}), we have for any $\epsilon_3<\epsilon_1 - \epsilon_2$:
\begin{align*}
    T_{a_i}^k(n) & \leq \sum_{l=1}^n \mathbb{I}\{Q_s^{k,a_i} + \sqrt{\frac{C\log n}{l}} \geq Q^*(k,a^*) - \epsilon_3\}\\
    & + \sum_{t=1}^n \mathbb{I}\{Q_{t-1}(k,a^*) +\sqrt{\frac{C\log t}{T_{a^*}^k(t-1)}} \leq Q^*(k,a^*) - \epsilon_3\}
\end{align*}
where $Q_l^{k,a_i}$ is defined to be the estimated action-value function of the pair $(k,a_i)$ by using the first $l$ samples stored in the $Returns(k,a_i)$ array; if $l>T_{a_i}^k(n)$, we will just set $Q_l^{k,a_i} = Q_n(k,a_i)$.

For the first term on the right side of the above inequality, we have for $l\geq N_1$:
\begin{align*}
    &\sum_{l=1}^n \mathbb{I}\{Q_l^{k,a_i}(\omega) + \sqrt{\frac{C\log n}{s}} \geq Q^*(k,a^*)(\omega) - \epsilon_3\}\\
    &= \sum_{l=1}^n \mathbb{I}\{\sqrt{\frac{C\log n}{s}} \geq Q^*(k,a^*)(\omega)- Q_l^{k,a_i}(\omega)  - \epsilon_3\}\\
    &\leq N_1 + \sum_{l=N_1+1}^n \mathbb{I}\{\sqrt{\frac{2\log n}{s}} \geq \epsilon_1 - \epsilon_2  - \epsilon_3\}\\
    & = N_1 +\sum_{l=N_1+1}^n \mathbb{I}\{s \leq \frac{2\log n}{(\epsilon_1 - \epsilon_2  - \epsilon_3)^2}\}\\
    &= \floor{\frac{2\log n}{(\epsilon_1 - \epsilon_2 - \epsilon_3)^2}}
\end{align*}

On the other hand, choose $\epsilon_4 > 0$ such that $\epsilon_3 \geq \epsilon_4$. Then, there exists $N_2 > 0$ such that for any $t > N_2$, $|Q_t(k,a^*)(\omega)-Q^*(k,a^*)(\omega)| < \epsilon_4$, we can also bound the second term:
\begin{align*}
    &\sum_{t=1}^n \mathbb{I}\{Q_{t-1}(k,a^*)(\omega) + 
    \sqrt{\frac{C\log t}{T_{a^*}^k(t-1)(\omega)}} \leq Q^*(k,a^*)(\omega) - \epsilon_3\}\\
    &\leq \sum_{t=1}^n \mathbb{I}\{Q_{t-1}(k,a^*)(\omega) \leq Q^*(k,a^*)(\omega) - \epsilon_3\}\\
    &\leq \sum_{t=1}^n \mathbb{I}\{ \epsilon_3\leq |Q^*(k,a^*)(\omega) - Q_{t-1}(k,a^*)(\omega)|\}\\
    &\leq N_2 + \sum_{N_2+1}^n \mathbb{I}\{ \epsilon_3\leq \epsilon_4\}\\
    &= N_2
\end{align*}

From the above two inequalities, it is easy to see: for any $\omega\in\Lambda$, $\frac{T_{a_i}^k(n)(\omega)}{n} \rightarrow 0$ for any $a_i\ne a^*$. Since $P(\Lambda) = 1$, we have $\frac{T_{a_i}^k(n)}{n} \rightarrow 0$ $w.p.1$ for any $a_i\ne a^*$.

Together with the assumption that the Q values are finite, the almost sure convergence of the V values for the state $k$ follows:
\begin{align}
\label{optimal state value}
V_n(k) \overset{w.p.1}{\longrightarrow} \sum_{a_i\neq a^*}0\cdot Q_n(k,a_i) + 1 \cdot Q^*(k, a^*) = V^*(k)
\end{align}
Therefore, we can conclude that $V_n(s) \rightarrow V^*(s)$ $w.p.1$ for any $s\in\gS$.

Lastly, it remains to establish the almost sure convergence of the Q values for any sub-optimal action $a_i\ne a^*$. We have:
\begin{align*}
    &Q_n(k,a_i) = \frac{1}{T^k_{a_i}(n)}\sum_{u=1}^{T^k_{a_i}(n)} r_u(k,a_i) + \sum_{k'\in \gS}\frac{T_{a_i}^k(n, k')}{T^k_{a_i}(n)}\cdot \frac{1}{T_{a_i}^k(n, k')}\sum_{u=1}^{T^k_{a_i}(n)}\mathbb{I}\{S_u=k'\}G_u^{k'}
\end{align*}
If there is no potential for a loop after the transition $(k, a_i, k')$, the analysis for the almost sure convergence is the same as that for the Q value of the optimal action.
However, for those transitions leading to an infinite loop (e.g., a self-loop), the episode cannot reach the terminal state. Note that the convergence result (\ref{optimal action convergence}) and the inequality (\ref{optimal policy}) implies that $\lim_{n\rightarrow\infty}\argmax_a Q_n(s,a) = a^*$ $w.p.1$. Thanks to the line \ref{line:detect-loop} and \ref{line:detect-loop-next} in Algorithm \ref{alg:mcucb-opff}, if a loop is detected in the state $k$, then the remaining episode will almost surely be the same episode generated by $(k,a^*)$. In this case, a complete episode starting in the state $k'$ after choosing the suboptimal action $a_i$ can occur. It follows $Q_n(s,a)\rightarrow Q^*(s,a)$ $w.p.1$ for any $s\in\gS$ and for all $a\in\gA$.
\end{proof}

Note that the convergence result for the estimated Q-value to the optimal action-value function (\ref{optimal action convergence}) together with the inequality (\ref{optimal policy}) imply that the  induced policy $\pi_Q(s) = \argmax_a Q_n(s,a)$ converges almost surely to an optimal policy (which is stationary and deterministic).

We also emphasize that the theorem implies almost sure convergence for {\em all} MDPs for the (fixed) finite-horizon criterion (in which case the optimal policy is non-stationary). Given a finite-horizon period $\gH$ = \{0, 1, \dots, h\}, the policy $\pi(s,t)$, Q values $Q(s,a,t)$, and V values $V(s,t)$ are now time-dependent for $t\in\gH$. In this case, one can consider the time $t$ as part of the state, making the state space $\gS\times\gH$. Since the time step always increases within an episode, the algorithm never revisits previous states. Therefore, all finite-horizon MDPs are OPFF, and Theorem \ref{theorem:mcucb-opff-converge} applies.

\section{Numerical Experiments}
We now present additional numerical experiments comparing MC-UCB and MCES. 
We emphasize that this numerical study aims to bring a better understanding of our theoretical results. Note that, MCES uses a more ideal, but less practical exploration scheme: for each episode, the initial state-action pair has to be randomly sampled from the state-action space. While MC-UCB does not require this condition and can be applied to more practical tasks. 
To be more consistent with prior works, we follow \citet{wang2022on} and focus on two classic OPFF environments: Blackjack and Cliff-Walking. 

\subsection{Blackjack}
We first experiment on the classic card game Blackjack (discussed in detail in \citet{Sutton1998}). We compare the performance, policy correctness, and absolute update difference. The performance is simply the average return of the learned policy. The policy correctness is the total number of states where the learned policy matches the optimal policy computed by the value iteration algorithm \citep{Sutton1998}. The absolute update difference is the absolute differences between the action-value functions before and after an update, which can measure the convergence rate of the action-value function. The performance is evaluated over 5 episodes. All results are averaged over 5 seeds.

\begin{figure}[h]
    \centering
    \begin{subfigure}[b]{0.3\textwidth}
        \centering
        \includegraphics[width=\textwidth]{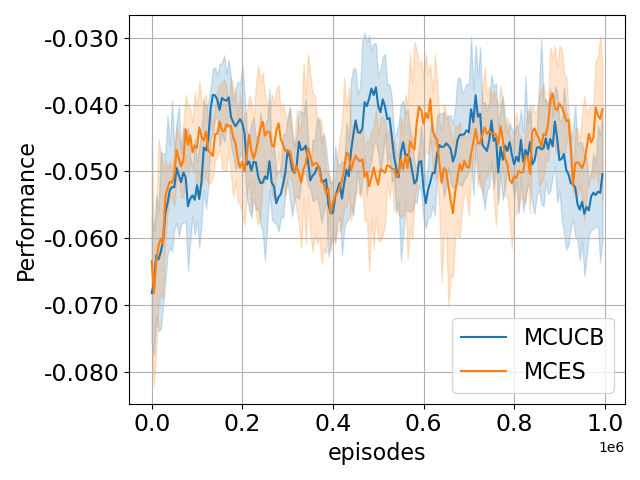}
        \caption{}
        \label{subfig:bj_score}
    \end{subfigure}
    \hfill
    \begin{subfigure}[b]{0.3\textwidth}
        \centering
        \includegraphics[width=\textwidth]{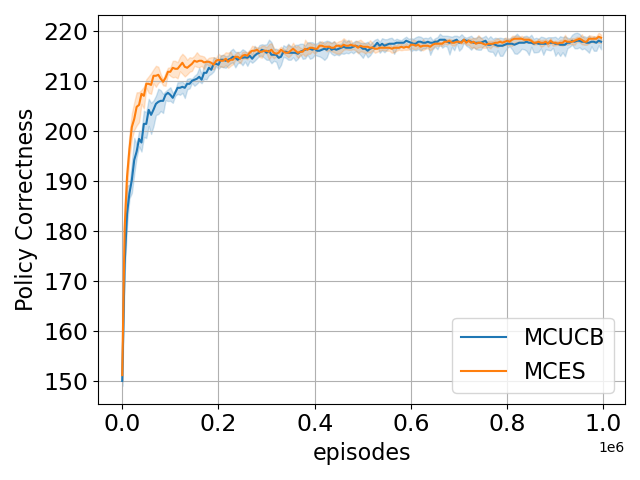}
        \caption{}
         \label{subfig:bj_policy}
    \end{subfigure}
    \hfill
    \begin{subfigure}[b]{0.3\textwidth}
        \centering
        \includegraphics[width=\textwidth]{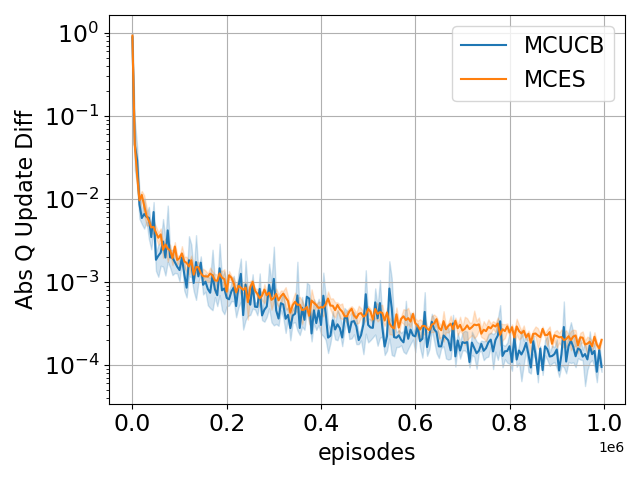}
        \caption{}
        \label{subfig:bj_q}
    \end{subfigure}
    \caption{Experiments running on the Blackjack. The x-axis is the total number of training episodes and the y-axis shows performance, policy convergence, and Q-value convergence respectively.}
    \label{fig:bj}
\end{figure}



In Figure \ref{subfig:bj_score}, we see that MC-UCB obtains performance on par with MCES. Figure \ref{subfig:bj_policy} and \ref{subfig:bj_q} show that MC-UCB indeed converges and learns a good policy in Blackjack. The convergence is faster than MCES in terms of the action-value function, even if MCES has a more ideal exploration scheme. 



\subsection{Cliff-Walking}
We now study the stochastic Cliff-Walking environment. The goal here is to successfully walk in a 2-D grid from a fixed starting point (bottom-left) to a terminal state (bottom-right) as soon as possible, without falling over the cliff. The state space contains each position in this 2-D grid, and the agent can move in 4 directions. The agent receives a reward of -0.01 for each step it takes, receives -1 for falling over the cliff in the bottom row, and receives 0 for reaching the goal. 
Additionally, we add stochasticity to this environment: when moving to the right, a wind with probability $p$ can blow the agent upwards or downwards with equal probability. 

We compare MCES, MC-UCB with a fixed initialization (FI\_MC-UCB), and MC-UCB with a uniform initialization (UI\_MC-UCB). The metrics used here are: (1) average return over 5 testing episodes, (2) absolute difference between optimal and estimated Q-values; and (3) absolute difference between optimal and estimated V-values. In Figure \ref{fig:cliff}, each curve aggregates 3 grid sizes ($3\times2$, $6\times4$, $9\times6$), 5 wind probabilities ($p\in\{0, 0.3, 0.5, 0.7, 1\}$), and 5 seeds (thus 75 runs in total). We set the maximum episode length $M$ as the grid size and set the UCB constant $C=1$ throughout.

\begin{figure}[h]
    \centering
    \begin{subfigure}[b]{0.3\textwidth}
        \centering
        \includegraphics[width=\textwidth]{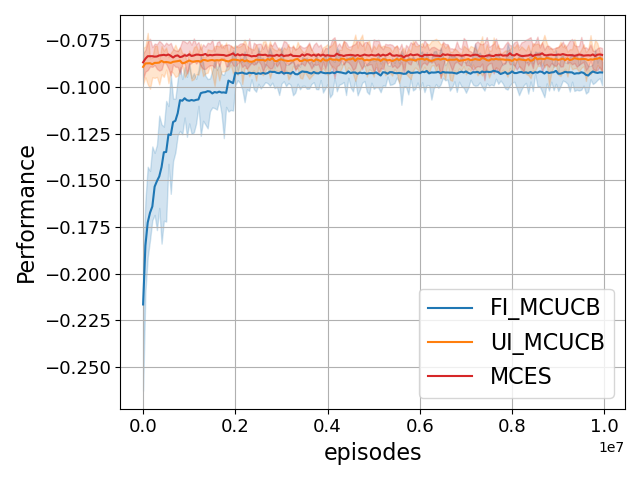}
        \caption{}
        \label{subfig:cliff_score}
    \end{subfigure}
    \hfill
    \begin{subfigure}[b]{0.3\textwidth}
        \centering
        \includegraphics[width=\textwidth]{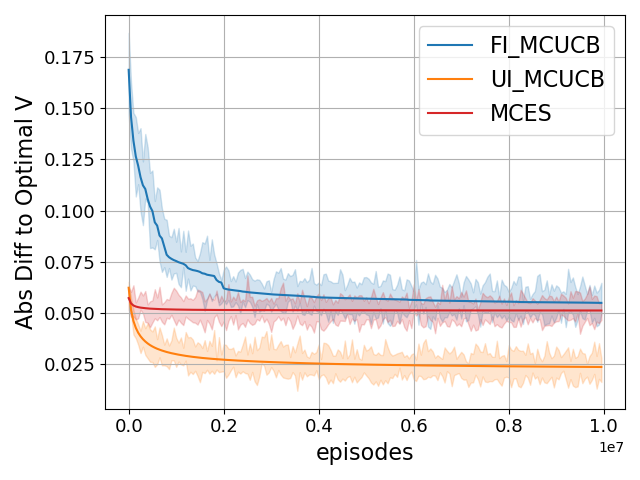}
        \caption{}
         \label{subfig:cliff_v}
    \end{subfigure}
    \hfill
    \begin{subfigure}[b]{0.3\textwidth}
        \centering
        \includegraphics[width=\textwidth]{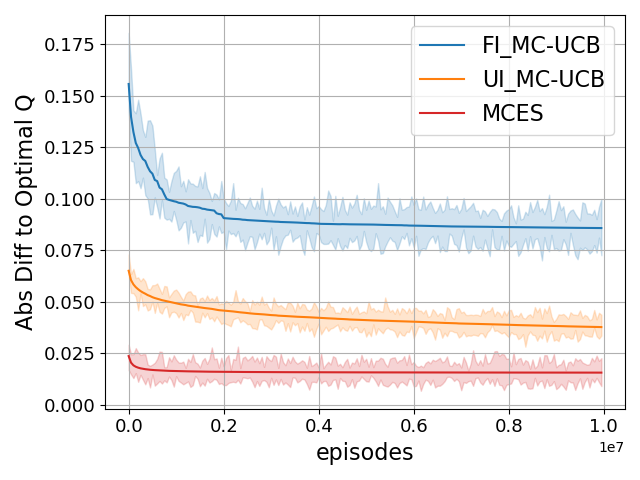}
        \caption{}
        \label{subfig:cliff_q}
    \end{subfigure}
    \caption{Experiments running on the Cliff-Walking. The x-axis is the total number of training episodes and the y-axis shows performance, V-value convergence, and Q-value convergence respectively.}
    \label{fig:cliff}
\end{figure}

Compared with MCES, FI\_MC-UCB can have competitive performance and V-value convergence as shown in Figure \ref{subfig:cliff_score} and \ref{subfig:cliff_v}. Additionally, allowing MC-UCB to explore starts can further improve the performance and surpass the V-value convergence of MCES. However, in terms of Q-value, Figure \ref{subfig:cliff_q} shows that MC-UCB variants have a lower convergence performance. We point out that slowly growing UCB exploration terms in the later learning phase may cause this when some Q values for suboptimal actions are updated infrequently, though infinitely often.


\section{Conclusion}

We conclude this paper by outlining some possible directions for future research. 
We have established the almost sure convergence for the MC-UCB algorithm for OPFF environments. Earlier in \citet{bertsekas1996neuro} and \citet{wang2022on}, it was shown that for some non-OPFF environments, the MCES algorithm does not converge. We conjecture that this is also the case for MC-UCB. Finding a counterexample remains an open question. As a first step for understanding the properties of MC-UCB with random episode lengths, it is important to understand under what conditions it does and does not converge. As a second step, it would be of interest to establish tight regret bounds for MC-UCB with random episode lengths with the OPFF assumption. It would also be interesting to extend the theory presented here to the MDP game context. For example, it would be of interest to study convergence in the context of 2-person 0-sum games for OPFF MDPs. This would allow us to move the theory in this paper a step closer toward a general convergence theory for AlphaZero \citep{silver2018general}.




\clearpage
\bibliography{rlj}
\bibliographystyle{rlj}

\beginSupplementaryMaterials
\appendix
\section{Almost Sure Convergence for Multi-Armed Bandits}
\label{appendix:mab}
Since the UCB algorithm was first introduced in the Multi-Armed Bandit (MAB) literature, we discuss some background related to our work in this section. The MAB problem has been widely studied in the past decades \cite{lattimore_szepesvari_2020}. In each round $t$ out of a total of $n$ plays, an agent pulls one arm $i\in\{1,2,\dots,K\}$ and receives the reward $X_{i,t}$ which is a random variable. We assume that successive plays of arm $i$ output rewards $X_{i,1}, X_{i,2}, \dots$ which are independent and identically distributed following an unknown law with an unknown bounded mean value $\mu_i$. Also, we assume rewards from different arms are independent but may not be identically distributed. 

One classic strategy to solve the exploration-exploitation dilemma in the MAB problem is called the UCB1 algorithm \cite{auer2002finite}. The agent maintains a one-sided upper confidence interval for the estimated mean reward $\hat{\mu}_i(n) = \frac{1}{T_i(n)}\sum_{t=1}^{T_i(n)}X_{i,t}$ for each arm $i$, where $T_i(n)$ is how many times the arm $i$ is pulled during the $n$ rounds of play. By the algorithmic design of UCB1, the choice of arm for the $t$-th round $I_t = \argmax_i \hat{\mu}_i(t-1) + \sqrt{\frac{C\log t}{T_i(t-1)}}$, where $C=2$ is a constant controlling the rate of exploration; and the second term will be $\infty$ if $T_i(t-1)=0$.

Much of the study of the MAB problem aims to develop efficient exploration-exploitation strategies in order to achieve low finite-horizon expected regret bound $R_n = \mu_* - \mathbb{E}[\frac{1}{n}\sum_{t=1}^{n}X_{I_t,t}]$ \cite{Bubeck2012}. However, instead of being satisfied with only the expected behavior of bandit algorithms, the convergence of pseudo-regret $\tilde{R}_n = \mu_* - \frac{1}{n}\sum_{t=1}^{n}X_{I_t,t}$ has also been studied. \citet{https://doi.org/10.48550/arxiv.1505.02865} shows that the pseudo-regret converges to 0 $w.p.1$ for two families of policies involving slow-growing functions. In particular, the authors mention that the convergence result of the g-ISM index policy proposed in section 3.2 of \citet{https://doi.org/10.48550/arxiv.1505.02865} can be extended to the UCB1 policy without providing a complete proof. We provide a direct, self-contained proof of almost sure convergence for the MAB problem using the UCB1 policy in Appendix \ref{App:proof of MAB}. 
\begin{lemma}[\textbf{Almost Sure Convergence for the MAB with UCB1}]
\label{single-bandit}
In the $K$-arm Bandit setting employing UCB1 exploration, let $X_{I_1,1}, X_{I_2,2}, \dots X_{I_n,n}$ be the rewards collected at each time of pulling one arm among $\{1,2,\dots, K\}$. Then,  $\overline{X}_n := \frac{1}{n}\sum_{t=1}^n X_{I_t,t} \rightarrow \mu_* $ w.p.1.
\end{lemma}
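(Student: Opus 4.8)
The plan is to mirror the pathwise argument used in the proof of Theorem~\ref{theorem:mcucb-opff-converge}, treating each arm as a degenerate ``state'' with deterministic optimal value $\mu_*=\max_i \mu_i$. First I would decompose the empirical average according to which arm produced each reward,
\begin{align*}
\overline{X}_n = \frac{1}{n}\sum_{t=1}^n X_{I_t,t} = \sum_{i=1}^K \frac{T_i(n)}{n}\,\hat{\mu}_i(n),
\end{align*}
so the claim reduces to controlling the weights $T_i(n)/n$ and the per-arm averages $\hat{\mu}_i(n)$. Because the UCB1 exploration bonus $\sqrt{C\log t / T_i(t-1)}$ diverges whenever an arm is starved, every arm is selected infinitely often, hence $T_i(n)\to\infty$ for all $i$ $w.p.1$. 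Consequently, applying the SLLN to the i.i.d.\ reward stream of each arm yields an event $\Lambda$ with $P(\Lambda)=1$ on which $\hat{\mu}_i(m)\to\mu_i$ for every arm $i$ as its sample count $m\to\infty$; in particular, each $\hat{\mu}_i(n)$ is bounded along every path in $\Lambda$.

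The heart of the argument is to show $T_i(n)/n\to 0$ $w.p.1$ for every suboptimal arm $i$ (those with $\Delta_i := \mu_* - \mu_i > 0$). Here I would invoke the same structural Multi-Armed-Bandit counting bound used in the main proof (inequality~8.4 of \citet{lattimore_szepesvari_2020}): fixing a constant $0<\epsilon_3<\Delta_i$ and a fixed optimal arm $j$,
\begin{align*}
T_i(n) &\leq \sum_{l=1}^n \mathbb{I}\Big\{\hat{\mu}_{i,l} + \sqrt{\tfrac{C\log n}{l}} \geq \mu_* - \epsilon_3\Big\} \\
&\quad + \sum_{t=1}^n \mathbb{I}\Big\{\hat{\mu}_{j}(t-1) + \sqrt{\tfrac{C\log t}{T_{j}(t-1)}} \leq \mu_* - \epsilon_3\Big\},
\end{align*}
where $\hat{\mu}_{i,l}$ is the estimate of arm $i$ from its first $l$ pulls. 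This inequality is a deterministic consequence of the index rule, so I can bound the two sums pathwise on $\Lambda$ exactly as in the main proof. For the first sum, since $\hat{\mu}_{i,l}\to\mu_i=\mu_*-\Delta_i$, the indicator eventually forces $\sqrt{C\log n / l}$ to exceed a fixed positive gap, so only $l = O(\log n)$ indices contribute and the sum is $O(\log n)$. For the second sum, since $T_j(t-1)\to\infty$ and $\hat{\mu}_j(t-1)\to\mu_*$, the estimate eventually stays within $\epsilon_3$ of $\mu_*$ and (together with the nonnegative bonus) the indicator vanishes for all large $t$, leaving a finite sum. Hence $T_i(n)=O(\log n)$ pathwise on $\Lambda$, giving $T_i(n)/n\to 0$ there and therefore $w.p.1$.

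To conclude, I would reassemble the decomposition on $\Lambda$: each suboptimal term satisfies $\tfrac{T_i(n)}{n}\hat{\mu}_i(n)\to 0$ because its weight vanishes while $\hat{\mu}_i(n)$ stays bounded, whereas the optimal arms carry total weight $\sum_{i:\mu_i=\mu_*}\tfrac{T_i(n)}{n}\to 1$ and each of their averages converges to $\mu_*$; thus $\overline{X}_n\to\mu_*$ $w.p.1$. I expect the main obstacle to be making the pathwise control of the second sum fully rigorous: its indicator couples the round index $t$ with the random count $T_j(t-1)$, so one must argue that $T_j(t-1)$ is large \emph{simultaneously} for all large $t$ in order to apply the per-sample convergence $\hat{\mu}_j(t-1)\to\mu_*$. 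This follows from $T_j(n)\to\infty$ on $\Lambda$, but the interchange between the per-arm SLLN (indexed by sample count) and the per-round time index requires care; a secondary, purely bookkeeping nuisance is aggregating the weights of multiple optimal arms when the maximizer $\mu_*$ is attained by a tie.
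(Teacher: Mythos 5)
Your proposal is correct and follows essentially the same route as the paper's own proof in Appendix~B: the same decomposition $\overline{X}_n=\sum_{i}\frac{T_i(n)}{n}\hat{\mu}_i(n)$, the same SLLN event $\Lambda$ built from the infinitely-often property of UCB1, and the same pathwise application of inequality~8.4 of \citet{lattimore_szepesvari_2020}, yielding an $O(\log n)$ bound on the first sum and a finite bound $N_2$ on the second. The only differences are cosmetic refinements on your side --- you explicitly aggregate tied optimal arms, whereas the paper tacitly works with a single $i^*$, and the time-index/sample-count coupling you flag is resolved pathwise on $\Lambda$ exactly as you suggest, just as in the paper.
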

We emphasize that the MAB problem is not enough to model the two-node OPFF MDPs, since the latter one may contain an action leading to the self-cycle. Also, the analysis of the MAB problem is largely simplified, as the Q estimates for suboptimal actions can be guaranteed to converge to their optimal ones $w.p.1$ solely by the SLLN, which thus leads to the logarithmic bound for the first component of $T_i(n)$ in the line (\ref{log bound}) below.

\section{Proof of Lemma 1}
\label{App:proof of MAB}
\begin{proof}
We can decompose:
\begin{align*}
    \overline{X}_n &= \sum_{i \neq i^*} \frac{T_i(n)}{n}\cdot\frac{1}{T_i(n)}\sum_{u=1}^{T_i(n)} X_{i,u} +\frac{T_{i^*}(n)}{n}\cdot\frac{1}{T_{i^*}(n)}\sum_{u=1}^{T_{i^*}(n)} X_{i^*,u}
\end{align*}
where $i^*$ is the optimal arm that yields the reward with the highest mean reward $\mu_*$.

Let $\Omega$ be the underlying sample space. By the design of the UCB1 algorithm, each arm will be pulled infinitely often if we conduct infinite rounds of play. In other words, $T_i(n)\rightarrow\infty$ as $n\rightarrow\infty$ for any arm $i$. 
By the Strong Law of Large Number for $i.i.d$ random variables $\{X_{i,u}\}_{u=1}^{T_i(n)}$, the estimated mean $\hat{\mu}_i(n) = \frac{1}{T_i(n)}\sum_{u=1}^{T_i(n)} X_{i,u}$ converges to the true mean $\mu_i$ $w.p.1$, for any arm $i$. That is, the set $\Lambda_i = \{\nu\in\Omega: \hat{\mu}_{i}(n)(\nu)\overset{n\rightarrow \infty}{\longrightarrow} \mu_i\}$ has probability measure 1. Moreover, let $\Lambda = \cap_i^K \Lambda_i$ and we have $P(\Lambda) = 1$. Then, what remains to show is $\frac{T_i(n)}{n} \rightarrow 0\ w.p.1$, for any $i\neq i^*$.

For any $i \neq i^*$, according to the inequality 8.4 of Section 8.1 in \cite{lattimore_szepesvari_2020}, given any $\epsilon_1$ and a positive and increasing function $f$, we can bound:
\begin{align}
    T_i(n) &= \sum_{t=1}^n \mathbb{I} \{I_t=i\}\\ \label{log bound}
    &\leq  \sum_{s=1}^n \mathbb{I}\{\hat{\mu}_{is} + \sqrt{\frac{2\log f(n)}{s}} \geq \mu_* - \epsilon_1\}\\
    & + \sum_{t=1}^n \mathbb{I}\{\hat{\mu}_{i^*}(t-1) + \sqrt{\frac{2\log f(t)}{T_{i^*}(t-1)}} \leq \mu_* - \epsilon_1\}
\end{align}
where $\hat{\mu}_{is}$ is defined to be the estimated mean reward of arm $i$ by using the first $s$ samples; if $s>T_i(n)$, we will just set $\hat{\mu}_{is} = \hat{\mu}_i(n)$.

Let $\Delta_i = \mu_* - \mu_i$ for any $i$. On the one hand, for any $\omega \in \Lambda$, given $\epsilon_2 > 0$ satisfying $\Delta_i > \epsilon_1 + \epsilon_2$, there exists $N_1 > 0$ such that for any $s > N_1$, $|\hat{\mu}_{is}(\omega)-\mu_i| < \epsilon_2$. Thus, we can further bound the first term:
\begin{align*}
    &\sum_{s=1}^n \mathbb{I}\{\hat{\mu}_{is}(\omega) + \sqrt{\frac{2\log f(n)}{s}} \geq \mu_* - \epsilon_1\} \\
    & = \sum_{s=1}^n \mathbb{I}\{\hat{\mu}_{is}(\omega) - \mu_i + \sqrt{\frac{2\log f(n)}{s}} \geq \Delta_i - \epsilon_1\} \\
    & \leq \sum_{s=1}^n \mathbb{I}\{|\hat{\mu}_{is}(\omega) - \mu_i| + \sqrt{\frac{2\log f(n)}{s}} \geq \Delta_i - \epsilon_1\} \\
    & \leq N_1 + \sum_{s=N_1+1}^n \mathbb{I}\{\epsilon_2 + \sqrt{\frac{2\log f(n)}{s}} \geq \Delta_i - \epsilon_1\} \\
    & = N_1 + \sum_{s=N_1+1}^n \mathbb{I}\{s\leq \frac{2\log f(n)}{(\Delta_i -\epsilon_1 - \epsilon_2)^2}\}\\
    & = \floor {\frac{2\log f(n)}{(\Delta_i -\epsilon_1 - \epsilon_2)^2}}
\end{align*}
where the last equation is guaranteed if the function $f$ is chosen such that $\frac{\log f(n)}{n}\rightarrow 0$ as $n\rightarrow \infty$.

On the other hand, for any $\omega\in\Lambda$, given $\epsilon_3 > 0$ satisfying $\epsilon_1 \geq \epsilon_3$, there exists $N_2 > 0$ such that for any $t > N_2$, $|\hat{\mu}_{i^*}(t-1)(\omega)-\mu_{i^*}| < \epsilon_3$, we can also bound the second term:
\begin{align*}
    &\sum_{t=1}^n \mathbb{I}\{\hat{\mu}_{i^*}(t-1)(\omega) + \sqrt{\frac{2\log f(t)}{T_{i^*}(t-1)}} \leq \mu_* - \epsilon_1\}\\
    & \leq \sum_{t=1}^n \mathbb{I}\{\hat{\mu}_{i^*}(t-1)(\omega) \leq \mu_* - \epsilon_1\}\\
    & \leq N_2 + \sum_{t=N_2+1}^n \mathbb{I}\{\epsilon_1 < \epsilon_3\}\\
    & = N_2
\end{align*}

Now, from the above three inequalities, it is easy to see: for any $\omega\in\Lambda$, $\frac{T_i(n)(\omega)}{n} \rightarrow 0$, as $n\rightarrow \infty$, for any $i\neq i^*$. Since $P(\Lambda)=1$, $\frac{T_i(n)}{n} \rightarrow 0$ $w.p.1$, for any $i\neq i^*$. And therefore, we finally have: $\overline{X}_n \rightarrow \mu_* $ w.p.1..
\end{proof}

\end{document}